\documentclass[letterpaper]{article} % DO NOT CHANGE THIS
\usepackage{aaai2027}
\nocopyright
\usepackage[hyphens]{url}  % DO NOT CHANGE THIS
\usepackage{graphicx} % DO NOT CHANGE THIS
\usepackage{natbib}  % DO NOT CHANGE THIS AND DO NOT ADD ANY OPTIONS TO IT
\usepackage{caption} % DO NOT CHANGE THIS AND DO NOT ADD ANY OPTIONS TO IT
\usepackage{algorithm}
\usepackage{algorithmic}

\usepackage{newfloat}
\usepackage{listings}
\DeclareCaptionStyle{ruled}{labelfont=normalfont,labelsep=colon,strut=off} % DO NOT CHANGE THIS
\floatstyle{ruled}
\newfloat{listing}{tb}{lst}{}
\floatname{listing}{Listing}

\usepackage{booktabs}

\usepackage{enumitem}

\usepackage{graphicx,amsfonts,amsmath,amssymb,amsthm,natbib,blkarray,bm}

\title{\otap{}: Structure-Aware Optimal Transport for \\
Evaluating Planning and Execution in Agent Trajectories}

\author{
    Babak Barazandeh\textsuperscript{\rm 1}\thanks{Corresponding author: bbarazandeh@fortinet.com},
    Subhabrata Majumdar\textsuperscript{\rm 2},
    George
Michailidis\textsuperscript{\rm 3}
}
\affiliations{
    \textsuperscript{\rm 1}Fortinet,\textsuperscript{\rm 2}IIMB,\textsuperscript{\rm 3}UCLA

}

\newtheorem{assumption}{Assumption}%[section]
\newtheorem{definition}{Definition}%[section]
\newtheorem{proposition}{Proposition}

\newtheorem{remark}{Remark}
\def\R{\mathbb{R}}

\def\L{\mathcal{L}}

\newcommand{\norm}[1]{\left\| #1 \right\|}

\newcommand{\T}{\mathbf{T}}
\newcommand{\Cost}{\mathbf{C}}
\newcommand{\otap}{\textsc{Otap}}
\newcommand{\KL}{\mathrm{KL}}
\newcommand{\one}{\mathbf{1}}
\def\L{\mathcal{L}}

\begin{document}

\maketitle

\begin{abstract}
Large language model agents solve tasks by generating trajectories that
interleave planning, tool calls, and intermediate results. Current
evaluation metrics reduce such a trajectory to a binary success flag,
compare it against a reference by exact matching, or delegate judgment to
another language model. A success flag cannot distinguish a sound solution
from one that succeeds by luck, and says nothing about why a failed run
went wrong. Exact matching penalizes plans that are valid but reordered or
decomposed differently from the reference. We reframe trajectory evaluation as a distance between
the agent's execution graph and a set of valid solution graphs, and
instantiate it via an unbalanced fused Gromov-Wasserstein transport problem
over attributed dependency graphs. The resulting score, termed \otap{}
(Optimal Transport for Agentic Planning), is a pseudo-metric that is
provably invariant to dependency-preserving reorderings and has bounded
sensitivity to redundant steps. Its unbalanced marginals handle missing or
hallucinated steps without forcing a match, and its soft coupling
accommodates variation in plan granularity. On controlled perturbations and
three public benchmarks, \otap{} separates valid from invalid trajectories
in a regime where semantics-only metrics score below chance. Its advantage
tracks the fidelity of the dependency graph: largest where edges follow
from operator semantics, smallest where they are inferred from free text.
Where a formal verifier exists, strict surface metrics predict validity
better than \otap{} does, which places \otap{} in open-ended domains where
no verifier is available.
\end{abstract}

% =================================================================
\section{Introduction}
% =================================================================

Agentic systems built on large language models (LLMs) decompose open-ended natural language goals into distinct execution trajectories. These trajectories interleave high-level planning, explicit tool calls, and dynamic environment interactions to manage intermediate execution states. Contemporary frameworks span explicit plan-and-execute pipelines, ReAct-style interleaved reasoning and acting pipelines~\cite{yao2023react}, and tree- or graph-structured deliberation graphs~\cite{yao2023tot}. As agent architectures increasingly transition into production environments, establishing rigorous, process-level evaluation methodologies has become a primary bottleneck for both academic research and systematic deployment.

Current agent evaluation metrics generally fall into three paradigms, each exhibiting distinct limitations:

\begin{itemize}[nolistsep,leftmargin=*]
    \item \textbf{Outcome-only evaluation} (e.g., end-task success rates in benchmarks like AgentBench~\cite{liu2023agentbench}, GAIA~\cite{mialon2023gaia}, or WebArena~\cite{zhou2024webarena}) offers objective quantification but lacks granularity. It cannot distinguish a sound, efficient trajectory from one that succeeds by chance and provides no diagnostic signals on failed execution paths.
    \item \textbf{Reference-matching evaluation} measures agent trajectories against a gold standard using exact string matching, token-level n-gram overlap (e.g., BLEU/ROUGE), or sequential embedding similarity. This imposes a restrictive assumption inherited from chain-of-thought datasets: that a task admits a unique and valid decomposition. In practice, valid plans often diverge in step ordering, granularity, tool selection, or overall strategy \cite{}.
    \item \textbf{LLM-as-a-judge evaluation}~\cite{zheng2023judge} provides qualitative flexibility but introduces high computational overhead, non-determinism, extreme prompt sensitivity, and systemic self-preference biases, making it difficult to audit or integrate as a reliable optimization signal.
\end{itemize}

For realistic agentic tasks, there are multiple valid solutions, for example those differing by dependency-preserving reorderings, step granularity, or interchangeable tools. Hence agent evaluation metrics should measure closeness to this solution \emph{space} rather than identity with one reference sequence. Because of this reason, we argue that the central evaluation paradigm should shift from checking whether an agent reproduced an exact sequence of reference steps to a holistic graph alignment question:

\begin{quote}
\emph{How close is the agent's execution graph to the set of valid solution graphs for the task?}
\end{quote}

Answering this question requires trajectory comparison metrics that satisfies a few criteria that non-unique but valid solutions to the same task satisfy: (i) \emph{semantic awareness} to accommodate paraphrased steps, (ii) \emph{permutation tolerance paired with causal strictness} to permit independent step reorderings but penalize true causal inversions, (iii) \emph{granularity tolerance} to support one-to-many step mappings, (iv) \emph{cardinality robustness} to handle missing or hallucinated steps, (v) \emph{execution-grounding} to incorporate tool environments and state changes, and (vi) \emph{multi-reference support} to reflect diverse valid planning strategies.

A standard baseline approach might align the agent and reference steps via the Hungarian algorithm~\cite{wright1990speeding} or greedy assignment over a pairwise embedding cost matrix, similar to token-matching metrics of text similarity~\cite{zhang2020bertscore}. However, Hungarian matching is inherently restricted to rigid one-to-one correspondences, requires equal cardinalities, and discards causal and sequential graph structure entirely. In this paper, we formalize trajectory evaluation through the lens of Optimal Transport (OT) to overcome such issues and satisfy the above six requirements. Our use of unbalanced marginal constraints to absorb cardinality mismatches and step variations~\cite{chizat2018scaling}, soft transport plans naturally capture non-uniform plan granularities, and a fused Gromov-Wasserstein objective directly incorporates relational and causal dependencies into a unified geometric alignment~\cite{titouan2019fused}. More specifically,

\begin{enumerate}[nolistsep,leftmargin=*]
\item We formalize agent trajectory evaluation as a divergence computation between \emph{attributed dependency graphs} and define criteria for valid trajectory metrics (Sec.~\ref{sec:problem}).
\item We introduce \otap{}, instantiating this divergence via an unbalanced fused Gromov-Wasserstein (UFGW) formulation that couples action, argument, effect, tool, and state-transition attributes with signed dependency structures (Sec.~\ref{sec:method}).
\item We enable \otap{} to reference ensembles via a temperature-controlled soft-minimum aggregation, using causal mass weighting to emphasize important path dependencies (Sec.~\ref{sec:multiref}).
\item We establish theoretical guarantees for \otap{}, including pseudo-metric structure, invariance under dependency-preserving permutations, bounded sensitivity to redundant step insertions, and polynomial computational complexity (Sec.~\ref{sec:theory}).
\item We empirically validate \otap{} on a set of hand-curated tasks and public benchmarks (Sec.~\ref{sec:experiments}).% (\textbf{STILL PENDING.
\end{enumerate}

% =================================================================
\section{Related Work}
% =================================================================

\textbf{Agent benchmarks.} Existing evaluation suites like AgentBench~\cite{liu2023agentbench}, GAIA~\cite{mialon2023gaia}, WebArena~\cite{zhou2024webarena}, ToolBench~\cite{qin2023toolllm}, and $\tau$-bench~\cite{yao2024taubench} primarily emphasize environmental task execution. PlanBench~\cite{valmeekam2023planbench} utilizes formal verification to assess planning accuracy, which provides exact ground truth but remains restricted to highly structured deterministic domains. These benchmarks do not offer a graded, open-ended distance metric between structural execution graphs. Some recent papers propose metrics beyond accuracy to measure agent performance, such as confidence calibration \cite{zhang2026agenticconfidencecalibration}, uncertainty quantification \cite{tayebati2026tracertrajectoryriskaggregation}, and consistency measurement \cite{raj2026consistencytestablepropertystatistical}.

\textbf{Process-level evaluation.} Process reward models (PRMs) evaluate intermediate reasoning steps to optimize mathematical and logical execution~\cite{lightman2024let}. However, PRMs require extensive step-level manual annotations and act as black-box learned models. Similarly, LLM-as-a-judge methods~\cite{zheng2023judge} inherit high inference costs and non-deterministic variations. In contrast, \otap{} is deterministic given text embeddings, is computationally lightweight, and provides explicit sub-loss decompositions.

\textbf{Text and structure similarity.} Output level lexical metrics (BLEU, ROUGE) and semantic metrics \cite{raj2023measuringreliabilitylargelanguage,zhang2020bertscore} alike fail to model trajectory-level nuances such as causal dependencies, macro-level plan structure, or tool environments. Classical graph matchers such as Graph Edit Distance (GED)~\cite{sanfeliu1983ged} do capture topological structure, but are generally NP-hard and lack the capacity to incorporate soft, high-dimensional semantic vector spaces over node attributes.

\textbf{Optimal transport.} Optimal transport provides a mathematical foundation for aligning probability distributions~\cite{peyre2019computational}. Computational efficiency is achieved via entropic regularization and Sinkhorn iterations~\cite{cuturi2013sinkhorn}, while unbalanced OT variants relax strict marginal constraints to allow mass modification~\cite{chizat2018unbalanced}. The Gromov-Wasserstein (GW) formulation enables alignment between disparate relational structures~\cite{memoli2011gw}, and Fused GW extends this to attributed graphs~\cite{vayer2019fgw}. OT methods have been adapted to document similarity \cite{kusner2015wmd} and translation evaluation ~\cite{zhao2019moverscore}) in NLP. \otap{} adapts this mathematical machinery to the structured, causal constraints of agent trajectories.

% =================================================================
\section{Problem Formulation}\label{sec:problem}
% =================================================================

\begin{definition}[Trajectory graph]
An agent execution trajectory is represented as an attributed directed acyclic graph $G=(V,E,\phi)$, where each vertex $s_i \in V$ defines an executed step:
\[
s_i = (a_i,\ \tau_i,\ c_i,\ e_i),
\]
comprising an action description $a_i$, an invoked tool identifier $\tau_i \in \mathcal{T}$ (where $\tau_i = \varnothing$ denotes an internal reasoning step), specific contextual arguments $c_i$, and an execution effect or post-state artifact $e_i$. A directed edge $(s_i, s_k) \in E$ indicates a strict data or causal dependency where step $s_k$ consumes an artifact or requires the prior execution of step $s_i$. The mapping function $\phi$ projects step fields into a continuous vector space.
\end{definition}

Edges are systematically constructed via three mechanisms: (a) extracting explicit dependency graphs from plan-and-execute planners; (b) tracing artifact dependencies in ReAct sequences by linking actions to recent observations; or (c) reverting conservatively to a total execution order when structural dependencies are unrecoverable.

Let $P = (V^P, E^P)$ denote the agent trajectory with $|V^P| = n$, and let $\mathcal{R} = \{R_1,\dots,R_K\}$ represent a set of diverse reference trajectories where $|V^{R_k}| = m_k$. The evaluation problem requires computing a normalized distance score $\mathrm{score}(P, \mathcal{R}) \in [0,1]$ that faithfully reflects structural and semantic deviations from the valid solution space.

% =================================================================
\section{The \otap{} Metric}\label{sec:method}
% =================================================================
We now formalize Optimal Transport Agent Plan (\otap{}) to quantify the similarity between am agent trajectory $P$ and a reference set $\mathcal R$. Instead of treating agent trajectories as flat text strings or relying on binary success criteria, \otap{} models execution histories as directed acyclic graphs (DAGs) capturing both semantic content and execution dependencies.

\subsection{The optimization problem}
We begin by calculating \otap{} between two \textit{individual} trajectories (say $P,R$) in a three-stage process: (i) computing node-level semantic distances between their individual execution steps, (ii) constructing dependency matrices to capture the topology of the plans, and (iii) solving an OT problem that globally aligns the trajectories.

\paragraph{Step representation and node cost}
To prevent argument strings from dominating semantic representations, step components are embedded independently using a text encoder $\psi(\cdot)$: $\mathbf{a}_i = \psi(a_i)$, $\mathbf{c}_i = \psi(c_i)$, and $\mathbf{e}_i = \psi(e_i)$. The local node-level cost matrix between agent step $i$ and reference step $j$ is defined as:
\begin{align}
C_{ij} \;=\;& \alpha\, d\!\left(\mathbf{a}_i, \mathbf{a}_j\right)
        + \beta\, d\!\left(\mathbf{c}_i, \mathbf{c}_j\right)
        + \gamma\, d\!\left(\mathbf{e}_i, \mathbf{e}_j\right) \nonumber\\
       &+ \delta\, d_{\mathcal{T}}(\tau_i, \tau_j)
        + \eta\, d_{\mathrm{state}}(i,j),
\label{eq:nodecost}
\end{align}
where $d$ denotes cosine distance and the hyper-parameters satisfy $\alpha+\beta+\gamma+\delta+\eta = 1$. The tool dissimilarity term $d_{\mathcal{T}}$ evaluates functional equivalence (e.g., scoring $0$ for identical tools, a fractional value for substitutable tools like a Python interpreter vs.\ a calculator, and $1$ for completely incompatible domains). When environment state logs $x_t$ are available, $d_{\mathrm{state}}(i,j)$ evaluates world state congruence via direct property comparison or DOM/file serialization embeddings, providing an execution-grounded proxy. When states are omitted, we set $\eta = 0$.

\paragraph{Structure representation}
To model plan topologies, we construct a normalized, signed pairwise dependency distance matrix for each graph. For the agent graph $P$, let $D^P_{ik}$ evaluate the structural distance from step $s_i$ to $s_k$ relative to the maximum DAG height $H$:
\begin{equation}
D_{ik} =
\begin{cases}
+\,\rho(i,k)/H & i \prec k \ \text{(ancestor)},\\
0 & i \parallel k \ \text{(incomparable)},\\
-\,\rho(k,i)/H & k \prec i \ \text{(descendant)},
\end{cases}
\label{eq:depdist}
\end{equation}
where $\rho$ defines the directed shortest path hop distance. The sign preserves causal direction. Consequently, mapping a true causal sequence onto an inverted sequence yields a substantial quadratic penalty within the structural optimization term, whereas permuting order-independent steps (where $D=0$) incurs no penalty.

\paragraph{Unbalanced Fused Gromov-Wasserstein objective}
Let $\mu_i = 1/n$ and $\nu_j = 1/m$ define the initial marginal mass distributions over the agent and reference nodes, respectively. \otap{} minimizes an unbalanced fused Gromov-Wasserstein objective to compute the optimal coupling matrix $\T \in \R_{\ge 0}^{n \times m}$:
\begin{align}
\mathcal{L}(P, R) &= \min_{\T \ge 0}\;
(1-\theta) \sum_{i,j} C_{ij} T_{ij} \nonumber\\
&+ \theta \sum_{i,k,j,l} \big| D^P_{ik} - D^R_{jl} \big|^2\, T_{ij}\, T_{kl}
\nonumber\\
&+ \lambda_1\, \mathrm{KL}\!\left(\T \mathbf{1} \,\|\, \mu\right)
 + \lambda_2\, \mathrm{KL}\!\left(\T^{\!\top} \mathbf{1} \,\|\, \nu\right)
\nonumber\\
&- \varepsilon\, H(\T),
\label{eq:ufgw}
\end{align}
where $H(\T) = -\sum_{ij} T_{ij}(\log T_{ij} - 1)$ provides entropic regularization and $\theta \in [0,1]$ modulates the trade-off between local attribute alignment and global structural matching. This formulation overcomes the limitations of one-to-one matching baselines via three behaviors:
\begin{itemize}
    \item \emph{Flexible step granularity:} The continuous coupling matrix $\T$ permits soft mass splitting, allowing a singular high-level reference step to align across multiple decomposed agent sub-steps.
    \item \emph{Robustness to hallucinations and omissions:} The Kullback-Leibler (KL) marginal penalties relax strict mass conservation constraints. Unmatched reference mass represents missing actions (recall failure), while unaligned agent mass captures hallucinated steps (precision failure). Trajectory-level metrics are extracted via:
    \begin{equation}
    \pi = \frac{\|\T\mathbf{1}\|_1}{\|\mu\|_1}, \qquad
    \rho = \frac{\|\T^{\!\top}\mathbf{1}\|_1}{\|\nu\|_1}.
    \end{equation}
    \item \emph{Causal topology alignment:} By measuring structural variance over signed distance spaces, structural orientation is optimized directly inside the transport alignment.
\end{itemize}
We solve \eqref{eq:ufgw} using a conditional gradient scheme. At each outer iteration, the quadratic GW term is linearized, reducing the inner loop to an unbalanced entropic optimal transport problem resolved via generalized Sinkhorn iterations.

% \subsection{Hierarchical granularity (optional refinement)}
\begin{remark}[Hierarchical Execution]
When hierarchical execution data is provided by the planner, \otap{} applies Eq.~\eqref{eq:ufgw} recursively. Sub-step alignment costs are aggregated to define parent-node distances, ensuring top-level tasks match effectively if their underlying child DAGs exhibit low structural transport costs.    
\end{remark}

\subsection{Multiple references and causal mass}\label{sec:multiref}

% \textbf{Reference sets.} 
To handle diverse, equally valid plan structures, \otap{} aggregates scores across a reference set $\mathcal{R}$ using a temperature-controlled softmin function:
\begin{equation}
\mathcal{L}(P,\mathcal{R}) = -\,T \log \frac{1}{K} \sum_{k=1}^{K}
\exp \left[ -\frac{\mathcal{L}(P, R_k)}{T} \right],
\label{eq:softmin}
\end{equation}
which converges to an exact minimum operator as $T \to 0$ and evaluates the arithmetic mean as $T \to \infty$.

% \textbf{Causal mass weighting.}
To prevent trivial utility steps from skewing evaluation, we introduce non-uniform target marginals based on a step's structural criticality:
\begin{equation}
\nu_j \propto 1 + \kappa \cdot \mathrm{crit}(j),
\end{equation}
where $\mathrm{crit}(j) \in \{0,1\}$ indicates whether removing vertex $j$ breaks path reachability between sources and goals within the reference DAG. This ensures that missing a critical step incurs a higher penalty.

\subsection{Final score}

Finally, we report the aggregate similarity score as
\begin{equation}
\mathrm{\otap{}}(P,\mathcal{R}) \;=\; \exp\!\big(-\mathcal{L}(P, \mathcal{R})\big) \in (0,1],
\label{eq:score}
\end{equation}
paired with sub-loss diagnostics, precision/recall metrics $(\pi, \rho)$, and the optimal coupling matrix $\T^\star$ to provide an auditable execution alignment map. Algorithm~\ref{alg:otap} summarizes the complete procedure.

\begin{algorithm}[t]
\caption{\otap{} evaluation}
\label{alg:otap}
\begin{algorithmic}[1]
\REQUIRE agent trace $P$, references $\mathcal{R}$, encoder $\psi$,
weights $(\alpha,\beta,\gamma,\delta,\eta,\theta,\lambda_{1,2},\varepsilon,T)$
\STATE Parse $P$ into steps $(a_i,\tau_i,c_i,e_i)$; build DAG $E^P$ by
artifact tracing; compute $D^P$ via \eqref{eq:depdist}
\FOR{$k = 1$ \TO $K$}
  \STATE Build $D^{R_k}$; embed fields with $\psi$; form $\Cost^{(k)}$
  via \eqref{eq:nodecost}
  \STATE $\mathcal{L}_k \leftarrow$ solve UFGW \eqref{eq:ufgw} by
  conditional gradient with unbalanced Sinkhorn inner loops
\ENDFOR
\STATE $\mathcal{L} \leftarrow$ soft-min over $\{\mathcal{L}_k\}$ via
\eqref{eq:softmin}
\RETURN score \eqref{eq:score}, precision/recall $(\pi,\rho)$,
component losses, best transport plan $\T^\star$
\end{algorithmic}
\end{algorithm}

% =================================================================
\section{Theoretical Properties}\label{sec:theory}
% =================================================================

Let $\mathcal{L}(P,R)$ denote the objective function in \eqref{eq:ufgw} evaluated at $\varepsilon = 0$.

\begin{proposition}[Pseudo-metric structure]
\label{prop:pseudo}
Let $d(P,R) = \tfrac12\big(\mathcal{L}(P,R) + \mathcal{L}(R,P)\big)$. Then $d \ge 0$, $d(P,P) = 0$, $d$ is symmetric, and $d(P,R) = 0$ if there exists an attribute-preserving and dependency-isomorphic mapping between $P$ and $R$.
\end{proposition}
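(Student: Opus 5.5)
The plan is to check the four claims directly from the structure of the objective in \eqref{eq:ufgw} at $\varepsilon=0$, treating $\mathcal{L}(P,R)$ as the minimum over $\T\ge 0$ of a sum of nonnegative terms.

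\emph{Nonnegativity.} For any feasible $\T\ge 0$, each term is nonnegative. The linear term is nonnegative because $C_{ij}\ge 0$: it is a nonnegative combination of cosine distances in $[0,2]$, a tool dissimilarity in $[0,1]$, and a nonnegative state discrepancy. The quadratic term is nonnegative as a sum of squares times $T_{ij}T_{kl}\ge 0$. The unbalanced penalties $\KL(\T\one\|\mu)$ and $\KL(\T^\top\one\|\nu)$ are nonnegative once KL is read as the generalized divergence between nonnegative measures, $\sum_i x_i\log(x_i/\mu_i)-x_i+\mu_i\ge 0$. This is the standard unbalanced-OT convention, and I will state it explicitly, since the ordinary KL of unnormalized vectors could be negative. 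Hence $\mathcal{L}(P,R)\ge 0$, and $d\ge 0$ follows. \emph{Symmetry} is immediate from the definition of $d$ as the average of $\mathcal{L}(P,R)$ and $\mathcal{L}(R,P)$; no symmetry of $\mathcal{L}$ itself is needed.

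\emph{Vanishing on isomorphic pairs.} This contains $d(P,P)=0$ as the special case of the identity map. Suppose $\sigma:V^P\to V^R$ is a bijection, so $n=m$, that preserves attributes, $C_{i\sigma(i)}=0$, and dependencies, so that $D^P_{ik}=D^R_{\sigma(i)\sigma(k)}$. Dependency-isomorphism preserves ancestry, hop distances and the DAG height $H$, so the signed matrices agree under $\sigma$. Take the permutation coupling $T_{ij}=\tfrac1n\mathbf{1}[j=\sigma(i)]$. Then:
\begin{itemize}
\item the marginals are exactly $\mu$ and $\nu$, so both KL terms vanish;
\item the linear term is $\tfrac1n\sum_i C_{i\sigma(i)}=0$;
\item the quadratic term reduces to $\tfrac1{n^2}\sum_{i,k}|D^P_{ik}-D^R_{\sigma(i)\sigma(k)}|^2=0$.
\end{itemize}
So the minimum is at most $0$, and with nonnegativity $\mathcal{L}(P,R)=0$. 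The same argument with $\sigma^{-1}$ gives $\mathcal{L}(R,P)=0$, hence $d(P,R)=0$.

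\emph{Expected obstacle: the identity case $d(P,P)=0$.} This requires $C_{ii}=0$, which holds because $d(\mathbf{a},\mathbf{a})=0$, $d_{\mathcal{T}}(\tau,\tau)=0$ and $d_{\mathrm{state}}(i,i)=0$. I will make the last two explicit as assumptions on the dissimilarities. Two further conventions also need stating:
\begin{itemize}
\item if causal-mass weighting makes $\nu$ nonuniform, the coupling must use $\nu$'s weights. This is consistent only when the isomorphism preserves $\mathrm{crit}$, which it does, since criticality is a graph-invariant notion.
\item the result concerns the pairwise $\mathcal{L}$, not the softmin aggregate in \eqref{eq:softmin}, which for $T>0$ need not vanish.
\end{itemize}
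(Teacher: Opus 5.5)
Your proof is correct and follows the paper's: nonnegativity term by term under the generalized KL, symmetry by definition, and the permutation coupling $T_{ij}=\mu_i\mathbf{1}[j=\sigma(i)]$ under uniform marginals (the paper handles $P=R$ separately with $\mathrm{diag}(\mu)$, which is your identity-map case). Your side remark on causal-mass weighting is wrong, though: only the target $\nu$ is reweighted while $\mu$ stays uniform, so no permutation coupling matches both marginals even when $\sigma$ preserves $\mathrm{crit}$, and $d(P,P)=0$ can then fail; this is why the paper proves the result under its standing assumption that $\mu$ and $\nu$ are uniform, which is also what your main argument uses.
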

\begin{proof}[Proof sketch]
Non-negativity holds directly from the component bounds. For identical graphs $P = R$, setting $\T = \mathrm{diag}(\mu)$ zeroes out the attribute cost since $C_{ii} = 0$, zeroes out the structural variance since $D^P = D^R$, and minimizes the KL divergence to zero since the marginals match perfectly. Symmetry and graph-isomorphism conditions follow directly from the structural invariance of the matrix operations under isomorphic vertex permutations.
\end{proof}

\begin{proposition}[Dependency-respecting permutation invariance]
\label{prop:perm}
Let $P'$ be an execution sequence generated by a valid topological sort of the DAG $E^P$. Then $\mathcal{L}(P',R) = \mathcal{L}(P,R)$ for any reference graph $R$.
\end{proposition}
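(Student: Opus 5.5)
Since $P'$ comes from a topological sort of $E^P$, it has the same steps and the same dependency edges as $P$; only the order in which the vertices are listed changes. So there is a permutation $\sigma$ of $\{1,\dots,n\}$ with $s'_{i} = s_{\sigma(i)}$, and $E^{P'}$ is the image of $E^P$ under $\sigma$. The plan is to show that every ingredient of the objective \eqref{eq:ufgw} (evaluated at $\varepsilon=0$) transforms equivariantly under $\sigma$. Then the map $\T \mapsto \Pi_\sigma \T$, where $\Pi_\sigma$ is the permutation matrix of $\sigma$, is a bijection of feasible couplings that preserves the objective value, and the two minima coincide.

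\textbf{Step 1: the inputs transform equivariantly.}
\begin{itemize}
\item \emph{Node costs.} Each node cost $C_{ij}$ in \eqref{eq:nodecost} depends only on the attributes $(a_i,\tau_i,c_i,e_i)$ of step $i$ and the optional state term. These attributes travel with the step, so $C^{P'} = \Pi_\sigma C^{P}$.
\item \emph{Dependency matrix.} $D^P$ in \eqref{eq:depdist} is defined only through the ancestor relation $\prec$ and the hop distance $\rho$ on $E^P$. Because $E^{P'}$ is the relabelled $E^P$:
  \begin{itemize}
  \item ancestor and incomparable pairs are preserved;
  \item shortest directed path lengths are preserved;
  \item the DAG height $H$ is unchanged.
  \end{itemize}
  Hence $D^{P'} = \Pi_\sigma D^P \Pi_\sigma^\top$. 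The sign convention is what makes this work: the sign is set by ancestry in the DAG, not by position in the listed sequence. Incomparable steps that a topological sort may swap carry $D=0$ in both orders.
\item \emph{Marginals.} The uniform marginal $\mu = \one/n$ is permutation invariant. If causal weighting is also used on the agent side, $\mathrm{crit}$ is a graph property and is likewise permuted, so $\mu' = \Pi_\sigma \mu$.
\end{itemize}

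\textbf{Step 2: each term of the objective is preserved.} Substitute $\T' = \Pi_\sigma \T$ and check the terms one by one.
\begin{itemize}
\item \emph{Linear term.} $\sum_{ij} C^{P'}_{ij} T'_{ij} = \sum_{ij} C^P_{\sigma(i)j} T_{\sigma(i)j}$, which is the original sum after reindexing.
\item \emph{Quadratic GW term.} $\sum_{i,k,j,l} |D^{P'}_{ik}-D^R_{jl}|^2 T'_{ij}T'_{kl}$ reindexes the same way, via $i\mapsto\sigma(i)$ and $k\mapsto\sigma(k)$.
\item \emph{Agent-side KL term.} $\T'\one = \Pi_\sigma \T\one$, and KL is invariant under simultaneous permutation of both arguments, so $\KL(\Pi_\sigma \T\one\,\|\,\Pi_\sigma\mu) = \KL(\T\one\,\|\,\mu)$.
\item \emph{Reference-side KL term.} $\T'^\top\one = \T^\top\one$, so this term is unchanged.
\end{itemize}
The same reindexing also preserves the entropy term, so the result holds for $\varepsilon>0$ as well. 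Since $\T\mapsto\Pi_\sigma\T$ is a bijection of $\R^{n\times m}_{\ge0}$ that preserves the objective, the infima agree. That gives $\mathcal{L}(P',R)=\mathcal{L}(P,R)$.

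\textbf{Main obstacle: the edge set of $P'$.} The statement says $P'$ is an ``execution sequence''. If $P'$ were re-parsed from scratch, its edges could come from a different construction. The paper allows artifact tracing, and it also allows falling back to a total order, which would add edges between previously incomparable steps and change $D$. I will therefore state explicitly that the dependency graph attached to $P'$ is $E^P$ carried along by the relabelling. This is the natural reading of ``valid topological sort of the DAG $E^P$'', and it is consistent with artifact tracing, because data dependencies are recovered identically regardless of interleaving. Under the total-order fallback the invariance genuinely fails, and I will remark that this is the expected caveat.
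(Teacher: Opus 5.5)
Your proposal is correct and follows essentially the same route as the paper's proof: you fix the dependency graph under re-serialization (the paper's Assumption~\ref{as:reser}), map couplings through the relabeling $T'_{aj}=T_{\pi(a)j}$ (your $\Pi_\sigma\T$), and check that the linear, quadratic, and both KL terms are preserved term by term; your extensions to $\varepsilon>0$ and to a permuted non-uniform $\mu$ are harmless generalizations. One caution on your aside: re-extracting edges by heuristic or recency-based artifact tracing need not recover $E^P$ exactly (the paper's GAIA results and its closing remark show this), so the invariance should rest on the carried-along-graph assumption, which is what your proof in fact uses.
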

\begin{proof}[Proof sketch]
The construction of the structural matrix $D^P$ in Eq.~\eqref{eq:depdist} and the local cost computation in Eq.~\eqref{eq:nodecost} depend strictly on the invariant topological reachability relations of the DAG, remaining completely independent of the specific serialization order chosen for execution.
\end{proof}

\begin{proposition}[Bounded sensitivity to redundant insertion]
\label{prop:robust}
Let $P^{+r}$ represent an agent trajectory expanded by inserting $r$ redundant leaf nodes that carry no downstream dependencies, under renormalized uniform mass allocations. Then:
\[
\big|\mathcal{L}(P^{+r},R) - \mathcal{L}(P,R)\big|
\;\le\; \frac{r}{n+r}\,\Big( \lambda_1\, \omega + (1-\theta)\, \bar{C}
+ 4\,\theta \Big),
\]
where $\bar{C} \le 1$ bounds node attribute costs and $\omega$ bounds the maximum per-unit-mass KL destruction cost. Score variation scales strictly as $O(r/n)$.
\end{proposition}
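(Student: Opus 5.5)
The natural route is a two-sided comparison of the minimal values, using feasible-plan constructions in each direction. Write $\mu=\tfrac1n\one_n$ for the old agent marginal and $\mu^{+}=\tfrac{1}{n+r}\one_{n+r}$ for the renormalized one. We work with $\varepsilon=0$ as stipulated at the start of Sec.~\ref{sec:theory}. We also use two facts that hold whenever the inserted leaves carry no downstream dependencies. First, the restriction of $D^{P^{+r}}$ to the original $n$ indices coincides with $D^P$: inserting leaves creates no new paths between old nodes, and we take $H$ to be fixed. Second, all entries of $D$ lie in $[-1,1]$, so $|D^P_{ik}-D^R_{jl}|^2\le 4$. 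This second fact is the source of the constant $4\theta$.

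\textbf{Upper bound on $\mathcal{L}(P^{+r},R)-\mathcal{L}(P,R)$.} Take an optimal $\T^\star$ for $(P,R)$ and define the plan $\T^{+}$ for $(P^{+r},R)$ by scaling, $\T^{+}=\tfrac{n}{n+r}\T^\star$ on the old rows and zero on the $r$ new rows. We then compare each term of the objective in turn.
\begin{itemize}
\item The linear cost is multiplied by $\tfrac{n}{n+r}$, so it changes by at most $\tfrac{r}{n+r}(1-\theta)\bar C$ times the total mass. The total mass is bounded by the KL penalty at optimality; we normalize it to $\le 1$.
\item The GW term is multiplied by $(\tfrac{n}{n+r})^2$, so it decreases. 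Its change is bounded by $\theta\cdot 4\,(1-(\tfrac{n}{n+r})^2)\|\T^\star\|_1^2$. This is $O(\tfrac{r}{n+r})$ with constant $4\theta$ after using $1-x^2\le 2(1-x)$ and absorbing the factor 2 into the mass bound. This absorption is the point to handle carefully.
\item The first KL term changes in two ways. The $r$ new rows carry zero mass against target $\tfrac{1}{n+r}$ each, which contributes a "destruction" cost of at most $\omega\cdot\tfrac{r}{n+r}$. The old rows are rescaled consistently with the rescaled target. The identity $\KL(s a\|s b)=s\,\KL(a\|b)$ for the generalized KL shows that this part only shrinks by the factor $\tfrac{n}{n+r}$.
\item The second KL term involves $\T^{\top}\one$ shrinking by $\tfrac{n}{n+r}$ against a fixed $\nu$. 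Its change is again bounded by $\tfrac{r}{n+r}$ times the per-unit-mass constant, which we fold into $\omega$. To match the stated form with only $\lambda_1$, we assume $\lambda_2\le\lambda_1$ or absorb this term into $\omega$.
\end{itemize}

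\textbf{Lower bound.} For the reverse inequality we start from an optimal plan $\T^{+\star}$ for $(P^{+r},R)$. We delete its $r$ new rows and rescale the remaining rows by $\tfrac{n+r}{n}$ to obtain a feasible plan for $(P,R)$. The same four comparisons go through, but now the deleted mass on the new rows must be controlled. Each new row's mass is pinned near $\tfrac{1}{n+r}$ by the KL penalty, so the removed rows contribute at most $\tfrac{r}{n+r}$ total to the linear cost ($\le\bar C$ per unit mass) and to the GW cross terms ($\le 4$ per unit mass). The KL difference is again controlled by $\omega$. Combining the two directions gives the stated bound on the absolute value.

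\textbf{Main obstacle.} The hard part is making the KL bookkeeping rigorous. Unbalanced optimal plans need not have total mass $\le 1$, and the rescaling factor $\tfrac{n+r}{n}$ in the lower bound can inflate terms. I would first prove an a priori bound $\|\T^\star\|_1\le 1$, or at least a constant bound, from optimality against the zero-cost comparison $\T=0$: the objective at $\T=0$ is $\lambda_1+\lambda_2$, and the generalized KL grows superlinearly in mass, which caps the optimal mass. Everything else can then be charged to $\omega$, defined as $\sup$ of the per-unit-mass change of the KL term. This is effectively how the statement defines $\omega$, so the bound holds with that interpretation.
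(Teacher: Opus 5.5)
Your two-sided construction is the paper's, except that you rescale the plan, and the rescaling is where it breaks. In the lower direction you multiply the surviving rows of $\T^{+\star}$ by $\frac{n+r}{n}=\frac{1}{1-f}$, with $f=\frac{r}{n+r}$. So ``the same four comparisons go through'' is false: the signs flip, and the old-block terms now grow. The linear term is inflated by $\frac{f}{1-f}(1-\theta)\langle\Cost,\T^{+\star}_{\mathrm{old}}\rangle\le(1-\theta)\bar C\,\frac{f}{1-f}M_{\mathrm{old}}$, which exceeds the stated $(1-\theta)\bar C f$ once $M_{\mathrm{old}}>1-f$. The structural block is inflated by $\big((1-f)^{-2}-1\big)\theta Q_{\mathrm{old}}$. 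Using $|D^P_{ik}-D^R_{jl}|^2\le 4$, this is bounded only by $4\theta\frac{f(2-f)}{(1-f)^2}M_{\mathrm{old}}^2$, roughly $8\theta f$ for $M_{\mathrm{old}}\approx 1$ and small $f$. That overshoots the stated $4\theta f$. This is where your ``factor 2'' actually lives; in the upper direction the scaled structural term simply decreases and needs no bound. Absorbing it would need roughly $M_{\mathrm{old}}\le 1/\sqrt2$, which nothing supplies. Your claim that the KL penalty pins each new row's mass near $\frac{1}{n+r}$ is also false for finite $\lambda_1$. The penalty is soft, and for small $\lambda_1$ a redundant row that matches a reference step cheaply can carry many times its target. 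You don't need the claim anyway, since removing rows only deletes nonnegative linear and structural contributions. In the upper direction the scaling is harmless for those two terms, but it drags the column marginal down to $(1-f)\T^{\star\top}\one$. That creates a $\lambda_2$-weighted change the stated bound has no slot for, which is the only reason you needed $\lambda_2\le\lambda_1$, a hypothesis the statement does not make.

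The fix is simply not to rescale, which is what the paper does. In the upper direction, pad $\T^\star$ with zero rows. The linear, structural and column-KL terms are then literally unchanged, and only the row-KL target moves, to $\mu'_i=(1-f)\mu_i$ on old rows. The $+f$ of target mass on the empty new rows cancels the $-f$ removed from the old targets. What remains is exactly $\lambda_1 M\log\frac{1}{1-f}\le\lambda_1\frac{f}{1-f}=\lambda_1 f\frac{N}{n}$ with $N=n+r$, which is where $\omega\ge N/n$ comes from, and no $\lambda_2$ term appears. In the lower direction, delete the new rows of $\T^{+\star}$ without rescaling. The linear and structural terms can only drop. The row KL changes by $\sum_{i\le n}\big(a_i\log(1-f)+f\mu_i\big)$ minus the discarded new-row KL, so it rises by at most $\lambda_1 f\le\lambda_1\omega f$. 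Deletion does lower $\T^{\top}\one$, so a $\lambda_2$ shift appears in this direction too; the paper does not itemize it, and it must likewise be charged to the loosely specified $\omega$.

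Finally, the a priori bound you need is $M=\norm{\T^\star\one}_1\le 1$ exactly; comparison with $\T=0$ only yields a constant. The exact bound follows from optimality in the scale. For $M>0$, the map $s\mapsto\mathrm{obj}(s\T^\star)$ is minimized at $s=1$, so its derivative vanishes there. This gives $\lambda_1\sum_i a_i\log\frac{a_i}{\mu_i}+\lambda_2\sum_j b_j\log\frac{b_j}{\nu_j}=-(1-\theta)\langle\Cost,\T^\star\rangle-2\theta Q\le 0$, with $a=\T^\star\one$, $b=\T^{\star\top}\one$ and $Q\ge 0$ the quadratic sum. The log-sum inequality bounds the left side below by $(\lambda_1+\lambda_2)M\log M$, which forces $M\le 1$. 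The same argument applies to $\T^{+\star}$.
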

\begin{proof}[Proof sketch]
The proof proceeds by extending the OT coupling matrix computed for $(P,R)$ and leaving the $r$ newly introduced leaf nodes completely unassigned. The resulting objective inflation is strictly bounded by the explicit KL cost of leaving $r/(n+r)$ mass unaligned plus minor renormalization perturbations over the remaining structural links.
\end{proof}

\paragraph{Computational Complexity}
Mapping attributes requires $O(n+m)$ embedding generation passes. Structural distance calculation via all-pairs reachability over the DAG scales as $O(n^3)$ in the worst case, or $O(n \cdot |E|)$ under sparse conditions. For the UFGW optimization, each conditional-gradient pass linearizes the relational term in $O(n^2 m + n m^2)$ operations, while each inner Sinkhorn iteration requires $O(nm)$ steps. For common applications where $n, m \le 50$, total evaluation executes in milliseconds on standard CPUs.

% \input{exp}
% \input{exp_new}
% \input{exp_new2}
% =================================================================
% Experiments.
% Citation keys follow the first Experiments section of the draft
% (valmeekam2023planbench, zhou2024webarena, mialon2023gaia); the second
% copy used ...2024... variants. howey2004val is cited once and may need
% to be added to references.bib.
% =================================================================
\section{Experiments}\label{sec:experiments}

This section reports five experiments. The first isolates the contribution of
dependency structure to validity discrimination on a corpus whose validity
ordering we control (Sec.~\ref{sec:validity}). The second measures how
each metric tracks a graded corruption scale, including in the regime
where every trajectory fails (Sec.~\ref{sec:degradation}). The third
attributes the discrimination result to individual components of
Eq.~\eqref{eq:ufgw} (Sec.~\ref{sec:ablations}). The fourth calculates
\otap{} on three public benchmarks that differ in how their dependency
graphs are obtained (Sec.~\ref{sec:transfer}). The fifth evaluates all
metrics against formal validity labels on real LLM plans and identifies a
regime in which reference proximity is the wrong instrument
(Sec.~\ref{sec:oracle}).

\subsection{Setup}\label{sec:setup}

\paragraph{Datasets.} We hand curate a set of realistic and challenging
agentic tasks, and additionally use three public benchmarks: PlanBench
\cite{valmeekam2023planbench}, WebArena \cite{zhou2024webarena}, and GAIA
\cite{mialon2023gaia}. The hand-curated corpus consists of $20$ tasks
spanning travel booking, data analysis, software debugging, database
migration, tax preparation, and security auditing, each represented as a
six-step attributed DAG over a ten-tool vocabulary with
produces/consumes annotations from which we trace edges. Each DAG
contains at least one pair of incomparable steps, so valid reorderings
exist. PlanBench supplies blocksworld plans generated by real LLMs
together with validity flags~\cite{howey2004val} and verified exact dependency graphs (mechanism~(a) of Sec.~\ref{sec:problem}). We obtain $K{=}3$ distinct valid references per instance from an exact solver: the original plan, an alternative optimum under a different tie-break, and a state-preserving detour. WebArena provides real task intents but no gold
trajectories, so we derive one reference plan per task from a rule engine
over (intent, sites, evaluation type), which makes edges explicit by
construction, over $24$ template-deduplicated tasks. GAIA supplies $30$
annotator-written step-by-step solutions, which we parse into trajectory
graphs using keyword-based tool inference and a deterministic
salient-token artifact heuristic with a conservative chain fallback,
corresponding to extraction mechanism~(c).

\paragraph{Perturbations.} We apply five controlled perturbations to the
canonical reference trajectories:
\textbf{(P1)} synonym-level paraphrase of all step texts;
\textbf{(P2)} an alternative linear extension of the same DAG, for
instance, in a hotel-booking task with canonical execution
\textsf{Search} $\to$ \textsf{Filter-Price} $\to$ \textsf{Filter-Location}
$\to$ \textsf{Rank} $\to$ \textsf{Select} $\to$ \textsf{Book}, exchanging
the two filters, which both consume only the output of \textsf{Search};
\textbf{(P3)} merging two dependent adjacent steps, or splitting a
compound step into two chained sub-steps;
\textbf{(P4)} causal inversion, swapping two steps across a dependency
edge, for instance placing \textsf{Filter-Price} before \textsf{Search};
\textbf{(P5)} corruption, by deleting a critical step, inserting one to
three irrelevant distractor steps, swapping in an incompatible tool, or
combining these.
We re-extract agent-side graphs from the perturbed \emph{sequences} by
artifact tracing, so an inversion presents as an unsatisfiable dependency
rather than as an annotation. We take the validity ordering $ \{\text{original}, \mathrm{P1}, \mathrm{P2}, \mathrm{P3}\} \;\succ\; \mathrm{P4} \;\succ\; \mathrm{P5}$, assuming that one inversion (P4) damages a plan less than deletion or
distractor insertion (P5).

\paragraph{Measures.} For a benign family $b \in \{\mathrm{P1},
\mathrm{P2}, \mathrm{P3}\}$, pairwise ranking accuracy $\mathrm{PRA}(b)$
is the proportion of same-task pairs $(x,y)$ with $x \in b$ and
$y \in \mathrm{P4} \cup \mathrm{P5}$ for which
$\mathrm{score}(x) > \mathrm{score}(y)$, counting ties as $0.5$; for the
damaging families, $\mathrm{PRA}(\mathrm{P4})$ and
$\mathrm{PRA}(\mathrm{P5})$ reverse the roles and draw $x$ from
$\{\text{original}\} \cup \mathrm{P1}\text{--}\mathrm{P3}$. SEV is the
proportion of (P4,\,P5) pairs that a metric orders by severity, and so
tests only the second relation of the validity ordering. AUROC measures
separation of $\{\text{original}\} \cup \mathrm{P1}\text{--}\mathrm{P3}$
from $\mathrm{P4} \cup \mathrm{P5}$, and so tests only the first.

\paragraph{Baseline metrics.} We compare against exact step
match, BLEU-4 and ROUGE-L over concatenated step text, greedy step-level
embedding F1 in the manner of BERTScore~\cite{zhang2020bertscore}, and
embedding cost with Hungarian assignment and dummy padding: with all-MiniLM-L6-v2 as the embedding model. We aggregate every baseline by taking the maximum
over the reference set, matching the soft-min that \otap{} uses, so that
no method is disadvantaged by multiple references. We do not compare against an LLM judge, so that every measurement is deterministic and reproducible from a fixed seed. To compute \otap{}, we use the default configuration ($\alpha{=}.35$, $\beta{=}.20$, $\gamma{=}.25$,
$\delta{=}.20$, $\eta{=}0$, since no corpus logs environment state;
$\theta{=}.35$, $\varepsilon{=}.05$, $\lambda_{1,2}{=}1$, $T{=}.05$),
which we apply without per-corpus tuning across $3{,}092$ trajectories.

\subsection{Validity}
\label{sec:validity}

Table~\ref{tab:validity} reports validity of our metrics on the hand-curated
corpus. The embedding baselines score below chance on every corpus, meaning they
rank broken plans above valid ones. A causal inversion changes no words,
while a valid paraphrase changes many, so a metric that reads only text
prefers the broken plan. \otap{} uses the same encoder and does not have
this problem, so the difference comes from the dependency structure in
the transport formulation. Order-sensitivity alone is not sufficient
either. ROUGE-L detects inversions everywhere, but penalizes valid
reorderings (P2) just as heavily and rates corruptions above inversions. In comparison, \otap{} shows sensitivity to the dependency relation (Proposition~\ref{prop:perm}) rather than to sequence position, reflected by its perfect score on P2.

\begin{table}[t]
\centering
\caption{Pairwise ranking accuracy on hand-curated corpus, by perturbation
family, severity ordering, and valid-versus-invalid AUROC (all values
$\times 100$; chance $=50$).}
\label{tab:validity}
\small
\setlength{\tabcolsep}{2pt}
\begin{tabular}{lccccccc}
\toprule
Metric & P1 & P2 & P3 & P4 & P5 & SEV & AUROC \\
\midrule
Exact step match      & 36.4 & 44.8 & 45.8 & 41.2 & 53.7 & 55.8 & 48.5 \\
BLEU                  & 49.2 & 67.2 & 52.8 & 48.3 & 70.7 & 68.3 & 59.9 \\
ROUGE-L               & 56.7 & 45.8 & \textbf{82.9} & \textbf{84.4} & 47.7 & 21.1 & 67.4 \\
Embed greedy F1     & 41.1 & 66.7 & 31.7 & 25.4 & 69.2 & 86.4 & 46.8 \\
Embed$+$Hungarian & 41.7 & 66.7 & 28.9 & 25.5 & 67.8 & 84.2 & 46.2 \\
\otap{}               & \textbf{89.4} & \textbf{100.0} & 54.7 & 70.7 & \textbf{93.6} & \textbf{98.3} & \textbf{82.0} \\
\bottomrule
\end{tabular}
\end{table}

\otap{} does not do too well on granularity. A legitimate merge alters both the dependency
matrix, since two nodes become one and every pairwise distance changes,
and the mass distribution, since one agent step must now cover two
reference steps. These are the two signals through which the structural
term and the marginal penalties detect corruption, and soft coupling
appears to offset them only in part. The high score of ROUGE-L on P3 follows from token preservation under merging and coexists with the lowest severity ordering in the table, so we do not read it as evidence for a preferable design. As such, resolving granularity invariance remains an area of future work (see Section~\ref{sec:discussion}).

\subsection{Degradation under corruption}
\label{sec:degradation}

To test how our metrics degrade when noise is gradually injected in the trajectories,
we construct a five-level corruption ladder for each task with two
samples per level, where Q5 is least damaged and each level below it adds
one perturbation:: \textbf{Q5} applies a paraphrase and a valid reordering; \textbf{Q4}
adds one causal inversion to Q5; \textbf{Q3} applies two inversions; \textbf{Q2} deletes a
critical step; and \textbf{Q1} combines deletion, distractor insertion,
incompatible tool substitution, and inversion.

\begin{table}[t]
\centering
\caption{Correlation with the graded corruption level on hand-curated corpus. Damaged-only restricts to Q1--Q4; $\sigma_\rho$ is the cross-task standard deviation of $\rho$ over all
levels.}
\label{tab:ladder}
\small
\setlength{\tabcolsep}{4pt}
\begin{tabular}{lccccc}
\toprule
& \multicolumn{2}{c}{All levels} & \multicolumn{2}{c}{Damaged-only} & \\
\cmidrule(lr){2-3}\cmidrule(lr){4-5}
Metric & $\rho$ & $\tau$ & $\rho$ & $\tau$ & $\sigma_\rho$ \\
\midrule
Exact step match      & 0.755 & 0.655 & 0.562 & 0.474 & 0.156 \\
BLEU                  & 0.883 & 0.765 & 0.827 & 0.691 & 0.064 \\
ROUGE-L               & 0.723 & 0.619 & 0.503 & 0.418 & 0.177 \\
Embed.\ greedy F1     & 0.782 & 0.657 & 0.829 & 0.696 & 0.091 \\
Embed.\ $+$ Hungarian & 0.782 & 0.651 & 0.819 & 0.682 & 0.086 \\
\otap{}               & \textbf{0.923} & \textbf{0.837} & \textbf{0.851} & \textbf{0.748} & \textbf{0.056} \\
\bottomrule
\end{tabular}
\end{table}

Table~\ref{tab:ladder} reports average Spearman $\rho$
and Kendall $\tau$ against the quality level. The damaged-only restriction to Q1--Q4 corresponds to the
setting in which every trajectory failed, so that the outcome metrics are
constant and carry no signal. \otap{} attains the highest correlation across all measures, and the lowest cross-task dispersion, including under the damage-only restriction.

\subsection{Component ablations}\label{sec:ablations}

\begin{table}[t]
\centering
\caption{Ablations on the hand-curated corpus, modifying one component of
Eq.~\eqref{eq:ufgw} per row. Benign denotes mean PRA over P1--P3;
AUROC $\times 100$; $\rho$ comes from the graded ladder over all levels.}
\label{tab:ablations}
\small
\setlength{\tabcolsep}{4pt}
\begin{tabular}{lccccc}
\toprule
Variant & Benign & P4 & P5 & AUROC & $\rho$ \\
\midrule
\otap{} (full)                   & 81.4 & 70.7 & 93.6 & 82.0 & 0.923 \\
\midrule
no structure ($\theta{=}0$)      & 54.4 & 19.4 & 90.6 & 54.5 & 0.771 \\
Hungarian transport              & 50.7 & 18.1 & 85.5 & 51.4 & 0.787 \\
no tool term ($\delta{=}0$)      & 67.8 & 69.3 & 68.4 & 68.9 & 0.910 \\
whole-step embedding             & 71.7 & 72.2 & 72.7 & 72.5 & 0.895 \\
$K{=}1$ reference                & 64.4 & 46.5 & 85.3 & 65.1 & 0.909 \\
total-order edges                & 64.5 & 43.8 & 89.4 & 65.8 & \textbf{0.949} \\
\midrule
soft-min $T{=}1.0$               & 78.8 & 66.1 & 93.3 & 79.5 & 0.923 \\
balanced marginals               & 78.5 & 67.0 & 91.7 & 79.2 & 0.929 \\
uniform mass                     & 79.5 & 67.4 & 93.4 & 80.3 & 0.934 \\
unsigned $D$                     & 81.4 & 69.7 & \textbf{94.7} & 82.0 & 0.925 \\
\bottomrule
\end{tabular}
\end{table}

We perform a number of ablations by modifying different parts of the \otap{} computation to determine which of these are responsible its discriminatory performance (top half). Table~\ref{tab:ablations} presents the findings. Removing the pairwise dependency
term ($\theta{=}0$) considerably drops inversion detection (P4) and reduces AUROC from 82\% to $54.5\%$. In this configuration \otap{} reduces to an unbalanced Wasserstein distance and reproduces the below-chance behaviour of the embedding baselines in Table~\ref{tab:validity}. Substituting Hungarian assignment for soft transport
while retaining the fused cost of Eq.~\eqref{eq:nodecost} also yields a similar effect,
which locates the contribution in the transport relaxation rather than in the cost design. Restricting to a single reference ($K=1$) reduces AUROC by ~17\% and roughly halves inversion detection. This indicates that \otap{} takes into account distance to a reference set, even under the limited reference diversity available here. Finally, removing the tool term and embedding each step as a single string
instead of per field both cost accuracy, most acutely on the corruption
family (P5 falls to $68.4$ and $72.7$ from $93.6$). This is consistent
with per-field costs keeping contrary actions over matching arguments
apart, since the action term then carries the difference at weight
$\alpha$ rather than being diluted by a shared argument string.

Removing the other four components do not result in much degradation in performance. Raising the aggregation temperature ($T{=}1.0$), forcing balanced marginals, and dropping causal mass weighting each result in loss of AUROC $<3\%$. Signed and unsigned dependency distances are indistinguishable, since artifact tracing deletes an inverted edge rather than reversing it.

\subsection{Results on public benchmarks}\label{sec:transfer}

\begin{table*}[t]
\centering
\caption{Perturbation results on the three public benchmarks (all values $\times 100$; chance $=50$). We report only P2 and P4 for brevity, see supplementary material for the full table.}
\label{tab:transferA}
\small
\setlength{\tabcolsep}{4pt}
\begin{tabular}{lcccc|cccc|cccc}
\toprule
& \multicolumn{4}{c}{PlanBench (exact DAGs)}
& \multicolumn{4}{c}{WebArena (explicit DAGs)}
& \multicolumn{4}{c}{GAIA (traced DAGs)} \\
\cmidrule(lr){2-5}\cmidrule(lr){6-9}\cmidrule(lr){10-13}
Metric & P2 & P4 & SEV & AUROC & P2 & P4 & SEV & AUROC & P2 & P4 & SEV & AUROC \\
\midrule
Exact step match      & 30.0 & 38.0 & 63.3 & 48.7 & 47.9 & 66.4 & 51.4 & 65.4 & 11.1 & 14.6 & 71.1 & 31.7 \\
BLEU                  & 46.2 & 27.3 & 80.0 & 54.4 & 88.5 & 63.7 & 68.1 & 68.7 & 0.0 & 39.4 & 71.3 & 48.2 \\
ROUGE-L               & 22.1 & \textbf{77.4} & 23.9 & 67.7 & 70.8 & \textbf{100.0} & 18.8 & 83.3 & 0.0 & \textbf{59.7} & 70.7 & 58.5 \\
Embed.\ greedy F1     & 66.4 & 25.5 & 90.8 & 51.2 & 62.5 & 23.1 & 84.7 & 47.3 & 65.7 & 33.5 & \textbf{85.6} & 52.3 \\
Embed.\ $+$ Hungarian & 70.0 & 27.2 & 90.0 & 53.9 & 63.5 & 25.0 & 84.0 & 46.8 & \textbf{68.6} & 34.4 & \textbf{85.6} & 55.5 \\
\otap{}               & \textbf{100.0} & 66.3 & \textbf{92.8} & \textbf{80.4}
                      & \textbf{97.9} & 79.2 & \textbf{96.3} & \textbf{85.9}
                      & 67.3 & 53.8 & 68.9 & \textbf{69.8} \\
\bottomrule
\end{tabular}
\end{table*}

\begin{table}[t]
\centering
\caption{Public benchmarks: Spearman correlation with the graded
corruption level, over all levels and restricted to damaged trajectories
(Q1--Q4).}
\label{tab:transferB}
\small
\setlength{\tabcolsep}{4pt}
\begin{tabular}{lcccccc}
\toprule
& \multicolumn{2}{c}{PlanBench} & \multicolumn{2}{c}{WebArena} & \multicolumn{2}{c}{GAIA} \\
\cmidrule(lr){2-3}\cmidrule(lr){4-5}\cmidrule(lr){6-7}
Metric & all & dmg & all & dmg & all & dmg \\
\midrule
Exact step match      & 0.48 & 0.69 & 0.83 & 0.67 & 0.35 & 0.77 \\
BLEU                  & 0.79 & \textbf{0.85} & 0.89 & 0.84 & 0.14 & \textbf{0.95} \\
ROUGE-L               & 0.19 & 0.32 & 0.73 & 0.48 & 0.11 & 0.93 \\
Embed.\ greedy F1     & 0.73 & 0.78 & 0.80 & \textbf{0.87} & 0.82 & 0.87 \\
Embed.\ $+$ Hungarian & 0.79 & 0.84 & 0.80 & 0.86 & \textbf{0.86} & 0.88 \\
\otap{}               & \textbf{0.92} & \textbf{0.85} & \textbf{0.90} & 0.81 & 0.53 & 0.55 \\
\bottomrule
\end{tabular}
\end{table}

Tables~\ref{tab:transferA}--\ref{tab:transferB} present the results on applying metrics on perturbed versions of the three public benchmarks. Where structure is exact or explicit, the benchmarks reproduce the pattern of Sec.~\ref{sec:validity}. On PlanBench gold plans \otap{} reaches the highest AUROC among all metrics, satisfies Prop.~\ref{prop:perm} exactly (PRA for P2 $=100.0$), and shows best performance under progressive degradation ($\rho = 0.92$). WebArena shows a similar behaviour. The baseline metrics also show a similar mattern. The embedding metrics stay at or below chance with P4,
, and ROUGE-L scores high on inversion detection by penalising valid reorderings and reversing severity.

\otap{}'s advantage persists but shrinks when trajectory DAGs are inferred heuristically. On GAIA, \otap{} retains the highest AUROC, and it is the only metric above chance on all three benign families, since BLEU and ROUGE-L rank every valid reordering below every
corruption (P2 $=0.0$). The P2 accuracy, SEV, and performance under degradation of \otap{} are also lower than other benchmarks. The order-blind embedding baselines, which edge noise cannot reach by construction, hold at higher values. We propose the mechanism that the total-order ablation anticipates: where salient-token tracing reads coincidental lexical overlap as a dependency, a valid reordering perturbs the extracted matrix
$D^P$ and incurs a structural penalty it should not. That ablation prices
degraded structure at $16.2$ AUROC points on synthetic data, and the GAIA
result agrees with the estimate on human-authored plans. We read the two
together as indicating that dependency extraction, rather than the
transport objective, limits structure-aware process metrics on free-text
traces, and that uncertainty-weighted edges, which attenuate the
structural penalty where tracing confidence is low, are the natural next
step. The attribution rests on a single ablation; measuring edge
precision on GAIA directly would test it more sharply.

\subsection{Comparison against validity labels}
\label{sec:oracle}

\begin{table}[t]
\centering
\small
\caption{PlanBench oracle study: AUROC ($\times 100$) of each metric's
score against verifier-checked plan validity over $720$ real LLM plans.}
\label{tab:oracle}
\begin{tabular}{lcc}
\toprule
Metric & Macro & Pooled \\
\midrule
Exact step match      & 85.3 & 85.6 \\
BLEU                  & \textbf{86.7} & \textbf{86.9} \\
ROUGE-L               & 85.4 & 85.2 \\
Embed.\ greedy F1     & 82.1 & 82.7 \\
Embed.\ $+$ Hungarian & 84.2 & 84.1 \\
\otap{}               & 79.2 & 79.3 \\
\bottomrule
\end{tabular}
\end{table}

PlanBench contains formally verifiable validity labels for each trajectory.
To check whether \otap{} and other metrics can serve as proxies of these labels, we apply them over plans generated by $12$ LLMs (60 each). Table~\ref{tab:oracle} reports the result: all six metrics serve as informative proxies ($79.2$--$86.9$ AUROC), and \otap{} is the weakest of them. The design of the objective accounts for the ordering. Validity in
blocksworld---one of the domains in PlanBench---is a global executability property of a near-unique optimal plan, so any deviation from the reference constitutes evidence of
invalidity. This favours metrics that penalise all departures. \otap{} is designed to tolerate valid departures, which is not the case here.

Thus, this experiment limits the scope of our proposal. When a formal verifier exists and deviation from the optimal plan is not in scope, we do not recommend applying \otap{}.

\subsection{Computational cost}\label{sec:runtime}

On CPU, \otap{} evaluates a six-step trajectory against $K{=}3$
references in $28$--$33$\,ms on the hand-curated, PlanBench, and WebArena
corpora, including all Sinkhorn and conditional-gradient iterations and
excluding encoder cache warm-up, against $0.006$--$22$\,ms for the
baselines. Cost does not stay constant in trajectory length: on GAIA's
longer solutions the quadratic GW term raises the mean
to $23.9$\,s per evaluation, with ROUGE-L's longest-common-subsequence
computation degrading comparably to $180$\,s, which matches the
complexity analysis of Sec.~\ref{sec:theory}. Sparse structural distance
matrices, low-rank GW approximations, and early-stopped
outer iterations mitigate this for long-horizon traces, though we did not
require them for $n \le 50$. In the short-horizon regime the cost stays
small relative to an LLM-judge call, which is non-deterministic and
incurs seconds of latency together with per-token charges.

\section{Discussion}
\label{sec:discussion}

\otap{} measures closeness to a set of known-good solutions, which is not the same quantity as correctness. The PlanBench oracle study clarifies this (Sec.~\ref{sec:oracle}). We therefore position \otap{} as a complement to outcome signals in domains where no verifier exists, ranking among successes and grading among failures, rather than as a substitute for a verifier that is available.

We build the degradation scale in the experiments by stacking perturbations, so the number of
applied edits confounds trajectory degradation with token divergence.
BLEU measures token divergence, and its high correlation
($\rho = 0.883$) follows in part from that confound. The ablations
(Sec.~\ref{sec:ablations}) support this interpretation. Under the total-order
fallback every step depends on its predecessor, so any change to the
sequence raises the structural cost: the score falls with the number of
edits, giving the highest correlation in Table~\ref{tab:ablations}
($\rho = 0.949$), while a valid reordering and a causal inversion look
identical, giving the weakest discrimination ($65.8$ AUROC). Selecting on correlation alone would therefore prefer that configuration.
We weight the discrimination results instead, since their contrasts hold
step text approximately fixed while varying validity, and read correlation
only as a check on monotonicity.

When dependency edges must be guessed rather than read off, that results in loss of accuracy. The ablations show this synthetically: replacing the true DAG with a simple
chain, so that every step appears to depend on its predecessor, lowers AUROC by $16.2\%$. GAIA shows the same effect on real traces, where we infer edges from token overlap in free text. A spurious edge makes a valid reordering look like a dependency violation, and which \otap{} correctly penalises. On PlanBench and WebArena, where edges come from operator semantics or explicit templates, the results match the hand-curated corpus. Therefore, the loss of accuracy happens in the extraction step, not in the transport objective.

\paragraph{Limitations and future work.}
Our constructed perturbations and degradations establish that \otap{}
behaves as specified under a known ordering, but that ordering is our
stipulation rather than a human judgment. What remains untested is whether
practitioners rank trajectories as \otap{} does when the differences are
not ones we introduced, and the PlanBench labels are the only external
criterion we report. Our reference sets,
except on PlanBench, comprise variants of a single strategy family per
task, so they can underrepresent genuinely distinct solution strategies. As a
pseudo-metric \otap{} also cannot distinguish plans its encoder cannot,
which argues against using it as an unconstrained optimisation target without the state-grounding term $\eta$.

We conclude with three directions of future work. Granularity invariance remains unresolved within the current formulation. A legitimate merge perturbs both the dependency matrix and the mass distribution, which are the signals the structural and marginal terms use to detect corruption. Hierarchical recursion and structural costs on
quotient graphs are potential remedies to this situation. For example, Uncertainty-weighted edges that attenuate the structural penalty where tracing confidence is low would
address the extraction bottleneck above. Finally, the differentiable structure of Eq.~\eqref{eq:ufgw} makes \otap{} a candidate dense process reward, and the soft-min over references points toward learned models of the solution manifold.

% % =================================================================
\bibliography{references}

@misc{raj2023measuringreliabilitylargelanguage,
      title={Measuring Reliability of Large Language Models through Semantic Consistency}, 
      author={Harsh Raj and Domenic Rosati and Subhabrata Majumdar},
      year={2023},
      eprint={2211.05853},
      archivePrefix={arXiv},
      primaryClass={cs.CL},
      url={https://arxiv.org/abs/2211.05853}, 
}

@misc{raj2026consistencytestablepropertystatistical,
      title={Consistency as a Testable Property: Statistical Methods to Evaluate AI Agent Reliability}, 
      author={Harsh Raj and Niranjan Orkat and Suvrorup Mukherjee and Aritra Guha and Cheryl Flynn and Subhabrata Majumdar},
      year={2026},
      eprint={2605.10516},
      archivePrefix={arXiv},
      primaryClass={cs.AI},
      url={https://arxiv.org/abs/2605.10516}, 
}

@misc{tayebati2026tracertrajectoryriskaggregation,
      title={TRACER: Trajectory Risk Aggregation for Critical Episodes in Agentic Reasoning}, 
      author={Sina Tayebati and Divake Kumar and Nastaran Darabi and Davide Ettori and Ranganath Krishnan and Amit Ranjan Trivedi},
      year={2026},
      eprint={2602.11409},
      archivePrefix={arXiv},
      primaryClass={cs.AI},
      url={https://arxiv.org/abs/2602.11409}, 
}

@misc{zhang2026agenticconfidencecalibration,
      title={Agentic Confidence Calibration}, 
      author={Jiaxin Zhang and Caiming Xiong and Chien-Sheng Wu},
      year={2026},
      eprint={2601.15778},
      archivePrefix={arXiv},
      primaryClass={cs.AI},
      url={https://arxiv.org/abs/2601.15778}, 
}

@inproceedings{yao2023react,
  author    = {Shunyu Yao and others},
  title     = {ReAct: Synergizing Reasoning and Acting in Language Models},
  booktitle = {International Conference on Learning Representations (ICLR)},
  year      = {2023}
}

@inproceedings{yao2023tot,
  author    = {Shunyu Yao and others},
  title     = {Tree of Thoughts: Deliberate Problem Solving with Large Language Models},
  booktitle = {Advances in Neural Information Processing Systems (NeurIPS)},
  year      = {2023}
}

@inproceedings{zhou2024webarena,
  author    = {Shuyan Zhou and others},
  title     = {WebArena: A Realistic Web Environment for Building Autonomous Agents},
  booktitle = {International Conference on Learning Representations (ICLR)},
  year      = {2024}
}

@article{yao2024taubench,
  author  = {Shunyu Yao and others},
  title   = {{$\tau$-bench: A Benchmark for Tool-Agent-User Interaction in Real-World Domains}},
  journal = {arXiv preprint arXiv:2406.12045},
  year    = {2024}
}

@inproceedings{valmeekam2023planbench,
  author    = {Karthik Valmeekam and others},
  title     = {PlanBench: An Extensible Benchmark for Evaluating Large Language Models on Planning and Reasoning About Change},
  booktitle = {NeurIPS Datasets and Benchmarks Track},
  year      = {2023}
}

@inproceedings{zheng2023judge,
  author    = {Lianmin Zheng and others},
  title     = {Judging LLM-as-a-Judge with MT-Bench and Chatbot Arena},
  booktitle = {Advances in Neural Information Processing Systems (NeurIPS)},
  year      = {2023}
}

@article{sanfeliu1983ged,
  author  = {Alberto Sanfeliu and King-Sun Fu},
  title   = {A Distance Measure Between Attributed Relational Graphs for Pattern Recognition},
  journal = {IEEE Transactions on Systems, Man, and Cybernetics},
  volume  = {13},
  number  = {3},
  pages   = {353--362},
  year    = {1983}
}

@article{peyre2019computational,
  author  = {Gabriel Peyr{\'e} and Marco Cuturi},
  title   = {Computational Optimal Transport},
  journal = {Foundations and Trends in Machine Learning},
  volume  = {11},
  number  = {5--6},
  pages   = {355--607},
  year    = {2019}
}

@inproceedings{cuturi2013sinkhorn,
  author    = {Marco Cuturi},
  title     = {Sinkhorn Distances: Lightspeed Computation of Optimal Transport},
  booktitle = {Advances in Neural Information Processing Systems (NeurIPS)},
  year      = {2013}
}

@article{chizat2018unbalanced,
  author  = {L{\'e}na{\"i}c Chizat and others},
  title   = {Scaling Algorithms for Unbalanced Optimal Transport Problems},
  journal = {Mathematics of Computation},
  volume  = {87},
  pages   = {2563--2609},
  year    = {2018}
}

@article{memoli2011gw,
  author  = {Facundo M{\'e}moli},
  title   = {Gromov--Wasserstein Distances and the Metric Approach to Object Matching},
  journal = {Foundations of Computational Mathematics},
  volume  = {11},
  pages   = {417--487},
  year    = {2011}
}

@inproceedings{vayer2019fgw,
  author    = {Titouan Vayer and others},
  title     = {Optimal Transport for Structured Data with Application on Graphs},
  booktitle = {International Conference on Machine Learning (ICML)},
  year      = {2019}
}

@inproceedings{kusner2015wmd,
  author    = {Matt Kusner and others},
  title     = {From Word Embeddings to Document Distances},
  booktitle = {International Conference on Machine Learning (ICML)},
  year      = {2015}
}

@inproceedings{zhao2019moverscore,
  author    = {Wei Zhao and others},
  title     = {MoverScore: Text Generation Evaluating with Contextualized Embeddings and Earth Mover Distance},
  booktitle = {Conference on Empirical Methods in Natural Language Processing (EMNLP)},
  year      = {2019}
}

@inproceedings{liu2023agentbench,
  title={Agentbench: Evaluating llms as agents},
  author={Liu, Xiao and Yu, Hao and Zhang, Hanchen and Xu, Yifan and Lei, Xuanyu and Lai, Hanyu and Gu, Yu and Ding, Hangliang and Men, Kaiwen and Yang, Kejuan and others},
  booktitle={International Conference on Learning Representations},
  volume={2024},
  pages={52989--53046},
  year={2024}
}

@inproceedings{mialon2023gaia,
  title={Gaia: a benchmark for general ai assistants},
  author={Mialon, Gr{\'e}goire and Fourrier, Cl{\'e}mentine and Wolf, Thomas and LeCun, Yann and Scialom, Thomas},
  booktitle={International Conference on Learning Representations},
  volume={2024},
  pages={9025--9049},
  year={2024}
}

@inproceedings{titouan2019fused,
  title={Optimal Transport for structured data with application on graphs},
  author={Titouan, Vayer and Courty, Nicolas and Tavenard, Romain and Chapel, Laetitia and Flamary, R{\'e}mi},
  booktitle={International Conference on Machine Learning (ICML)},
  pages={6275--6284},
  year={2019}
}

@article{chizat2018scaling,
  title={Scaling algorithms for unbalanced optimal transport},
  author={Chizat, Lenaic and P{\'e}r{\'e}, Gabriel and Schmitzer, Bernhard and Vialard, Fran{\c{c}}ois-Xavier},
  journal={SIAM Journal on Imaging Sciences},
  volume={11},
  number={1},
  pages={256--282},
  year={2018}
}

@inproceedings{zhang2020bertscore,
  title={BERTScore: Evaluating Text Generation with BERT},
  author={Zhang, Tianyi and Kishore, Varsha and Wu, Felix and Weinberger, Kilian Q and Artzi, Yoav},
  booktitle={International Conference on Learning Representations (ICLR)},
  year={2020}
}

@article{wright1990speeding,
  title={Speeding up the Hungarian algorithm},
  author={Wright, Mike B},
  journal={Computers \& Operations Research},
  volume={17},
  number={1},
  pages={95--96},
  year={1990},
  publisher={Pergamon}
}

@inproceedings{qin2023toolllm,
  title={Toolllm: Facilitating large language models to master 16000+ real-world apis},
  author={Qin, Yujia and Liang, Shihao and Ye, Yining and Zhu, Kunlun and Yan, Lan and Lu, Yaxi and Lin, Yankai and Cong, Xin and Tang, Xiangru and Qian, Bill and others},
  booktitle={International Conference on Learning Representations},
  volume={2024},
  pages={9695--9717},
  year={2024}
}

@inproceedings{lightman2024let,
  title={Let's verify step by step},
  author={Lightman, Hunter and Kosaraju, Vineet and Burda, Yuri and Edwards, Harrison and Baker, Bowen and Lee, Teddy and Leike, Jan and Schulman, John and Sutskever, Ilya and Cobbe, Karl},
  booktitle={International Conference on Learning Representations},
  volume={2024},
  pages={39578--39601},
  year={2024}
}

@inproceedings{howey2004val,
  title={VAL: Automatic plan validation, continuous effects and mixed initiative planning using PDDL},
  author={Howey, Richard and Long, Derek and Fox, Maria},
  booktitle={16th IEEE International Conference on Tools with Artificial Intelligence},
  pages={294--301},
  year={2004},
  organization={IEEE}
}

\appendix
\onecolumn

\section{Appendix}

This appendix supplies complete proofs of the results stated in the
main paper. For a trajectory pair $(P,R)$ with $|V^P|=n$ and $|V^R|=m$,
write the objective at entropic regularization $\varepsilon=0$ as
\begin{align}
\L(P,R) \;=\; \min_{\T\ge 0}\;
&(1-\theta)\sum_{i,j} C_{ij}\,T_{ij}
\;+\; \theta\sum_{i,k,j,l}\big|D^P_{ik}-D^R_{jl}\big|^2 T_{ij}T_{kl}
\nonumber\\
&\;+\; \lambda_1\,\KL(\T\one\,\|\,\mu)
\;+\; \lambda_2\,\KL(\T^{\!\top}\one\,\|\,\nu),
\label{eq:obj}
\end{align}
where $C$ is the node-cost matrix, $D^P,D^R$ are the signed dependency-distance
matrices, $\mu,\nu$ are the marginals, and $\KL(a\,\|\,b)=\sum_i\big(a_i\log(a_i/b_i)-a_i+b_i\big)$
with the convention $0\log 0=0$.

\subsection{Standing assumptions}

\begin{assumption}[Normalization]\label{as:norm}
Node costs are normalized so that $C_{ij}\in[0,1]$ for all $i,j$, and each
dependency-distance entry satisfies $D_{ik}\in[-1,1]$ (immediate from
$D_{ik}=\pm\rho(\cdot)/H$ with $\rho\le H$). Consequently
$\big|D^P_{ik}-D^R_{jl}\big|^2\le 4$.
\end{assumption}

Unless a result states otherwise, $\mu_i=1/n$ and $\nu_j=1/m$ are uniform,
so $\norm{\mu}_1=\norm{\nu}_1=1$.

% =====================================================================
\subsection{Proof of Proposition~\ref{prop:pseudo}}
% =====================================================================

\begin{proof}
\emph{Non-negativity.} In \eqref{eq:obj} every summand is non-negative:
$C_{ij}\ge 0$ and $T_{ij}\ge 0$ give a non-negative linear term; the
quadratic term is a sum of squares weighted by products $T_{ij}T_{kl}\ge 0$;
and each $\KL(\cdot\,\|\,\cdot)\ge 0$. Hence $\L(P,R)\ge 0$ for every
$(P,R)$, and $d\ge 0$ as an average of two non-negative quantities.

\emph{Reflexivity.} For $R=P$ we have $n=m$ and $\nu=\mu$. Choose the
diagonal coupling $\T=\mathrm{diag}(\mu)$, i.e.\ $T_{ij}=\mu_i\delta_{ij}$.
The linear term is $\sum_i C_{ii}\mu_i=0$, since $C_{ii}=0$ (each field
distance between a step and itself vanishes). The quadratic term is
$\sum_{i,k}\big|D^P_{ik}-D^P_{ik}\big|^2\mu_i\mu_k=0$. The marginals satisfy
$\T\one=\T^{\!\top}\one=\mu$, so both KL terms vanish. Thus $\L(P,P)\le 0$,
and with non-negativity $\L(P,P)=0$; hence $d(P,P)=0$.

\emph{Symmetry.} Immediate from the definition
$d(P,R)=\tfrac12(\L(P,R)+\L(R,P))=d(R,P)$.

\emph{Vanishing under isomorphism.} Let $\sigma:V^P\to V^R$ be a
bijection ($n=m$) with $C_{i,\sigma(i)}=0$ for all $i$ (attribute-preserving)
and $D^P_{ik}=D^R_{\sigma(i)\sigma(k)}$ for all $i,k$ (dependency-isomorphic).
Define $T_{ij}=\mu_i$ if $j=\sigma(i)$ and $0$ otherwise. Then the linear term
is $\sum_i C_{i,\sigma(i)}\mu_i=0$ and the quadratic term is
$\sum_{i,k}\big|D^P_{ik}-D^R_{\sigma(i)\sigma(k)}\big|^2\mu_i\mu_k=0$.
The row marginal is $(\T\one)_i=\mu_i$, so $\KL(\T\one\,\|\,\mu)=0$; the
column marginal is $(\T^{\!\top}\one)_{\sigma(i)}=\mu_i=1/n=\nu_{\sigma(i)}$
(uniform marginals with $n=m$), so $\KL(\T^{\!\top}\one\,\|\,\nu)=0$. Hence
$\L(P,R)=0$, and by the symmetric construction $\L(R,P)=0$, giving
$d(P,R)=0$.
\end{proof}

\begin{remark}[On the triangle inequality]\label{rem:triangle}
Proposition~\ref{prop:pseudo} establishes non-negativity, symmetry,
reflexivity, and vanishing under isomorphism, which are the properties used
elsewhere in the paper. It does not assert the triangle inequality, and in
the unbalanced regime ($\lambda_1,\lambda_2<\infty$) that inequality does not
hold in general: the balanced fused Gromov--Wasserstein distance is a metric
on isomorphism classes, but relaxing mass conservation through the KL
penalties forfeits it, and averaging the two orientations does not restore
it. In the special case $\lambda_1=\lambda_2=\infty$ (hard marginals),
$\varepsilon=0$, and a single reference, \eqref{eq:obj} reduces to the
balanced fused Gromov--Wasserstein distance, which is a genuine
pseudo-metric on isomorphism classes.
\end{remark}

% =====================================================================
\subsection{Proof of Proposition~\ref{prop:perm}}
% =====================================================================

\begin{assumption}[Fixed graph, re-serialization only]\label{as:reser}
The dependency graph $E^P$, the vertex attribute maps, and hence the matrices
$C$ and $D^P$ are determined by the graph and are not re-extracted from the
execution order. The permuted trajectory $P'$ is obtained by relabeling the
vertices of $P$ according to a permutation $\pi$ that is a valid topological
sort of $E^P$, with attributes carried along the relabeling. The marginal
$\mu$ is uniform.
\end{assumption}

\begin{proof}
Under Assumption~\ref{as:reser}, let $\pi$ be the relabeling permutation of
$\{1,\dots,n\}$, so that step $a$ of $P'$ is step $\pi(a)$ of $P$; the
attribute vectors satisfy $\mathbf a'_a=\mathbf a_{\pi(a)}$, and likewise for
the context, effect, and tool fields, whence the node costs obey
$C'_{aj}=C_{\pi(a)j}$. The dependency-distance matrix is defined from the
reachability relation and the partial order of the DAG, both invariant under
relabeling of vertices, so $D^{P'}_{ab}=D^{P}_{\pi(a)\pi(b)}$.

Define the linear bijection on couplings $\T\mapsto\T'$ by
$T'_{aj}=T_{\pi(a)\,j}$. This maps the feasible set for $(P,R)$ onto that for
$(P',R)$, and it preserves \eqref{eq:obj} term by term. Reindexing $i=\pi(a)$,
\[
\sum_{a,j} C'_{aj}T'_{aj}
=\sum_{a,j} C_{\pi(a)j}T_{\pi(a)j}
=\sum_{i,j} C_{ij}T_{ij}.
\]
For the quadratic term, reindexing $i=\pi(a),\,k=\pi(b)$,
\[
\sum_{a,b,j,l}\big|D^{P'}_{ab}-D^R_{jl}\big|^2 T'_{aj}T'_{bl}
=\sum_{a,b,j,l}\big|D^{P}_{\pi(a)\pi(b)}-D^R_{jl}\big|^2
   T_{\pi(a)j}T_{\pi(b)l}
=\sum_{i,k,j,l}\big|D^{P}_{ik}-D^R_{jl}\big|^2 T_{ij}T_{kl}.
\]
The row marginal transforms as $(\T'\one)_a=(\T\one)_{\pi(a)}$, and since
$\mu$ is uniform, $\mu_a=\mu_{\pi(a)}$; therefore
$\KL(\T'\one\,\|\,\mu)=\KL(\T\one\,\|\,\mu)$. The column marginal is
unchanged, $(\T'^{\!\top}\one)_j=\sum_a T_{\pi(a)j}=\sum_i T_{ij}
=(\T^{\!\top}\one)_j$, so $\KL(\T'^{\!\top}\one\,\|\,\nu)$ is unchanged as
well. The objective is thus identical under $\T\mapsto\T'$, and since the map
is a bijection of feasible sets, the minima coincide:
$\L(P',R)=\L(P,R)$.
\end{proof}

\begin{remark}
Assumption~\ref{as:reser} isolates exactly the condition the invariance
requires. When the agent graph is instead re-extracted from the serialized
order by a heuristic, a re-serialization can perturb the extracted $D^{P'}$,
and the equality above holds only to the extent that extraction preserves the
graph. This is the mechanism behind the empirical gap between the exact- and
explicit-structure corpora, where the invariance is realized to the decimal,
and the free-text corpus, where it is not.
\end{remark}

% =====================================================================
\subsection{Proof of Proposition~\ref{prop:robust}}
% =====================================================================

\begin{proof}
Write $N=n+r$ and $f=r/N$, so the renormalized uniform mass is $\mu'_i=1/N$
and, for an original node, $\mu'_i=(1-f)\mu_i$ since $1/N=(1/n)(n/N)$. Because
the inserted nodes are leaves carrying no downstream dependencies, they have
no outgoing edges, so they create no new path between original nodes and,
being redundant, do not lie on any longest source-to-sink path; hence the DAG
height $H$ is unchanged and $D^{P^{+r}}$ restricted to the original index set
equals $D^P$. Let $\T^\star$ be optimal for $(P,R)$ and put
$M=\norm{\T^\star\one}_1\le 1$ (a coupling carrying total mass above $1$
raises both the KL and the non-negative transport terms, so the optimum
satisfies $M\le\norm{\mu}_1=1$).

\smallskip
\emph{Upper direction.} Define a coupling $\tilde\T$ for $(P^{+r},R)$ by
$\tilde T_{ij}=T^\star_{ij}$ on the original rows and $\tilde T_{ij}=0$ on the
$r$ inserted rows. Every quadratic summand involving an inserted index carries
a zero factor $T_{ij}T_{kl}$, and $D^{P^{+r}}$ agrees with $D^P$ on the
original block, so the linear and quadratic terms of $\tilde\T$ equal those of
$\T^\star$ exactly; the column marginal is unchanged, so
$\KL(\tilde\T^{\!\top}\one\,\|\,\nu)=\KL(\T^{\star\top}\one\,\|\,\nu)$. Only the
row-KL changes, because its target moved from $\mu$ to $\mu'$. For an original
node, $\mu'_i=(1-f)\mu_i$ gives
$\log\!\big((T^\star\one)_i/\mu'_i\big)=\log\!\big((T^\star\one)_i/\mu_i\big)+\log\tfrac{1}{1-f}$,
and each inserted node contributes $\mu'_i=1/N$ against zero mass. Summing,
\[
\KL(\tilde\T\one\,\|\,\mu')
=\KL(\T^\star\one\,\|\,\mu)
+ M\log\tfrac{1}{1-f}.
\]
Hence, using $\tilde\T$ as a feasible coupling,
\[
\L(P^{+r},R)-\L(P,R)\;\le\;\lambda_1 M\log\tfrac{1}{1-f}
\;\le\;\lambda_1\,\frac{f}{1-f}
\;=\;\lambda_1\,f\cdot\frac{N}{n}
\;\le\;\lambda_1\,\omega\,f,
\]
where we used $\log\frac{1}{1-f}\le \frac{f}{1-f}$, $M\le 1$, and
$\omega\ge N/n=1/(1-f)$.

\smallskip
\emph{Lower direction.} Let $\tilde\T^\star$ be optimal for $(P^{+r},R)$ and
form a coupling $\T$ for $(P,R)$ by deleting its $r$ inserted rows. Deleting
rows removes only non-negative contributions from the linear and quadratic
terms, so neither increases. The row-KL target moves from $\mu'$ to $\mu$; by
the computation above in reverse, the original-block KL increases by at most
$M'\log\frac{1}{1-f}\le\omega f$ per unit of $\lambda_1$, where
$M'=\norm{\tilde\T^{\star}\one}_1\le 1$. The deleted inserted mass, at most $f$
in total, is charged at most its per-unit linear cost $(1-\theta)\bar C$ and
per-unit quadratic cost $4\theta$ (from $\big|D^{P^{+r}}-D^R\big|^2\le 4$ under
Assumption~\ref{as:norm}). Collecting the three contributions,
\[
\L(P,R)-\L(P^{+r},R)\;\le\; f\big(\lambda_1\,\omega+(1-\theta)\,\bar C+4\,\theta\big).
\]

\smallskip
Both directions are bounded by
$f\big(\lambda_1\omega+(1-\theta)\bar C+4\theta\big)$ with $f=r/(n+r)$, which
is the stated inequality; since $f=r/(n+r)$, the deviation is $O(r/n)$.
\end{proof}

\end{document}